\title{Communication Efficient Private Federated Learning Using Dithering}
\name{Burak Hasırcıoğlu and Deniz Gündüz
%\thanks{Thanks to XYZ agency for funding.}
}
\address{Information Processing and Communications Lab, Imperial College London, UK}
\newcommand{\f}{f_{\mathbf{w}}}
\newcommand{\LB}{\left(}
\newcommand{\RB}{\right)}
\newcommand{\U}{\mathrm{Unif}}
\newcommand{\norm}[2]{\left|\left| #2\right |\right|_{#1}}
\newcommand{\fg}{\nabla\ell(\mathbf{w}_t, d)}
\newcommand{\fgAv}[2]{\nabla \ell _{#1}^{#2}}
\newcommand{\fgAvEst}[2]{\hat{\nabla} \ell _{#1}^{#2}}
\newcommand{\g}{\mathbf{g}}
\newcommand{\round}[1]{\left \lceil #1 \right \rfloor}
\newtheorem{defn}{Definition}
\newtheorem{theorem}{Theorem}
\newtheorem{lemma}{Lemma}
\begin{document}
\onecolumn

%\ninept
%
\maketitle
\begin{abstract}
The task of preserving privacy while ensuring efficient communication is a fundamental challenge in federated learning. In this work, we tackle this challenge in the trusted aggregator model, and propose a solution that achieves both objectives simultaneously. We show that employing a quantization scheme based on subtractive dithering at the clients can effectively replicate the normal noise addition process at the aggregator. This implies that we can guarantee the same level of differential privacy against other clients while substantially reducing the amount of communication required, as opposed to transmitting full precision gradients and using central noise addition. We also experimentally demonstrate that the accuracy of our proposed approach matches that of the full precision gradient method.
\end{abstract}
\begin{keywords}
differential privacy, communication efficiency, compression, dithering, federated learning
\end{keywords}

\section{Introduction}
Federated learning (FL) framework allows multiple clients to collaboratively learn a model with the help of a parameter server (PS), without sharing their local datasets \cite{mcmahan17a}. Instead, clients share their local updates with the PS after local training round and the PS broadcasts the aggregated model back to the clients. Given the size of modern neural network architectures, this brings a massive amount of communication overhead. Hence, one core challenge in FL is the communication cost.

Privacy is a significant concern when it comes to machine learning (ML) since the solutions depend on data that can reveal sensitive information about its owner. Hence, while training ML models, it is vital to prevent any sensitive features from the training set from leaking. Privacy is one of the core promises of FL since data never leaves clients and only the local model updates are shared. Unfortunately, it has been shown that such updates, and even the final trained model are enough to reveal sensitive information about the training set \cite{geiping2020neurips, carlini2019secret, haim2022reconstructing, balle2018improving}. Hence, privacy protection is still a concern in FL and has been the topic of ongoing research. 

In this paper, we use differential privacy (DP) \cite{dwork2014algorithmic} to measure the privacy leakage, which has become the gold standard since its introduction. When an algorithm receives two adjacent datasets as inputs, DP measures the degree of similarity between the resulting outputs, and hence, it is a measure of indistinguishability. We give its formal definition as follows.

\begin{defn}
A randomized algorithm $\mathcal{M}$ satisfies
$(\varepsilon,\delta)$-DP for $\varepsilon>0$ and $\delta\in[0,1)$, if and only if 
\begin{equation}
    \Pr[\mathcal{M}(\mathcal{D})\in \mathcal{O}]\leq e^{\varepsilon}\Pr[\mathcal{M}(\mathcal{D'})\in \mathcal{O}]+\delta,
\end{equation}
for all datasets $\mathcal{D}$ and $\mathcal{D'}$ differing in only one sample, and for all possible subsets $\mathcal{O}$ of $\mathcal{M}$'s range.
%If $\delta =0$, then $\mathcal{M}$ is called $\varepsilon$ pure differentially private, or $\varepsilon$-DP.
\end{defn}

The value of $\varepsilon$ in the definition indicates the extent of privacy loss. A smaller $\varepsilon$ signifies a more robust privacy guarantee. Additionally, $\delta$ quantifies the probability of failure in the guarantee. Therefore, to ensure a strong privacy guarantee, $\delta$ must approach zero.

In this work, we jointly address privacy and communication efficiency in the trusted aggregator model. Hence, we aim to provide central DP guarantees; that is, we want the average of the client updates to satisfy the DP guarantees. The trusted aggregator model covers the scenarios in which the PS is trusted by the clients. This can be the case when the clients give their data to a trusted organization, such as a government agency, an independent regulator, or a research institution while they do not want to reveal their data to other clients via model updates, or to third parties via the final deployed model. However, a trusted aggregator model may not always require a trusted PS. For example, trusted execution environments (TEE) can be employed at the PS and data encryption between TEE and the clients and the code run at the TEE can be formally verified \cite{mo2021ppfl, nguyen2022federated}. Another application area of our scheme is distributed learning, such as the settings in \cite{horvath2022natural}. In such settings, a massive amount of data belonging to the same entity is used to train a model. So, the training task is distributed across many GPUs or servers, called worker terminals owned by the same entity. In order to avoid privacy leakage from the final deployed model, which can be accessed by third parties either via white-box or black-box access, the training procedure must satisfy DP. In such cases, our scheme significantly reduces the communication cost from GPUs or the worker terminals to the PS. Hence, the trusted aggregator model in this paper is only an abstraction but not a stringent system requirement, and applies to many scenarios encountered in practice.

Communication-efficient FL has been an active research area \cite{alistarh2017qsgd, suresh2017distributed, horvath2022natural}. However, when it comes to DP guarantees, directly extending these techniques is suboptimal since compression for communication efficiency and privacy introduce separate errors. Works such as \cite{amiri2021compressive, triastcyn2021dp, chaudhuri2022privacy, shah2022optimal, asi2023fast, isik2023exact, lang2023joint} consider tackling these two problems jointly. However, they mostly aim at guaranteeing local DP, in which each update from clients is separately protected. Unfortunately, such a stringent requirement considerably hurts the final model's performance. Moreover, due to the use of specially designed mechanisms to satisfy local DP, the aforementioned methods are not directly extendable to central DP. 

The main idea of our proposed method is that the randomization required for privacy hurts the accuracy, and hence, it may not be necessary for clients to send full precision updates since they will be already destroyed by the PS to some extent after adding noise. Instead, we propose using subtractive dithering quantization on the client updates prior to sending them to the PS. This reduces the communication cost while keeping the same accuracy. We show that if the quantization step size is randomly generated following a particular distribution, with the help of shared randomness between each client and the PS, the noise in the reconstructed update at the PS follows a normal distribution for any third party that does not have access to the common randomness. By employing dithered quantization, we simulate the normal noise addition process to ensure DP and avoid adding noise twice for quantization and DP, while significantly reducing the communication overhead.

%Specifically talk about works using dithering citing them as the most relevant papers but critise them as local DP and still introduce some additional error due to quantization.

\section{Problem Setting}
We consider FL with $N$ clients and a PS. Each client $i$ has its own dataset, $\mathcal{D}_i$ and they collaboratively learn a model $\f \in \mathbb{R}^m$ by minimizing a cumulative loss function, i.e.,

\begin{equation}
    \mathbf{w} = \arg\min \frac{1}{N} \sum_{i\in [N]} \frac{1}{|\mathcal{D}_i|} \sum_{d\in \mathcal{D}_i} \ell\LB \f \LB d_f\RB, d_l\RB, 
\end{equation}
where $\ell$ is the local loss function, $d_f$ and $d_l$ are the features and the label of the data point $d$, i.e, $d=(d_f,d_l)$. In the sequel, for brevity, we write $\ell(\mathbf{w},d) \triangleq \ell\LB \f \LB d_f\RB, d_l\RB$.

For training, we consider distributed stochastic gradient descent (SGD) optimization. That is, at each iteration $t$, each available client $i$ samples a small batch, $\mathcal{B}_{i,t} \in \mathcal{D}_i$, of average size $B$, and for each sample in the batch, computes the gradients of the loss function with respect to the current model parameters, i.e., $\fg$, $d\in \mathcal{B}_{i,t}$. Then, each client sends the average of the sample gradients, i.e., $\fgAv{i}{t} \triangleq \frac{1}{B} \sum_{d \in \mathcal{B}_{i,t}} \fg$, to the PS, where these gradients are further averaged to obtain the global average, denoted by $\g$. Finally, the new global model, which is updated by $\g$, is broadcast to all available clients for the consecutive update round.

\textbf{Threat Model:} In this work, we stick to the trusted aggregator model, i.e., the PS is trusted. Moreover, for compression, we assume that there are separate sources of common randomness shared between each client and the PS. Clients are assumed to be honest but curious. That is, they adhere to the protocol but may try to infer sensitive client information from average updates received from the PS. Hence, one of our goals is to protect the privacy of each client's local dataset from other clients since the updated model across rounds may reveal important sensitive information. Besides, once the training is completed, the final deployed model may leak sensitive information as well. Hence, we also aim to protect privacy leakage from the final deployed model, which makes our model and solution relevant even when the clients are trusted, as in distributed learning.

\section{Proposed Method}
In our proposed solution, we reduce the communication cost of each client update to the PS by quantizing them using subtractive dithering. We choose the step size of the quantization and the dithering parameters based on a gamma random variable. Such a trick achieves a quantization error that follows a Gaussian distribution, and hence, results in $(\varepsilon,\delta)$-DP guarantees, as shown in \cref{lem:gaussian_mechanism}.  Our solution uses the following fact about the scale mixture of uniform distributions, which appears in \cite{walker1999uniform}.

\begin{lemma}
\label{lem:uniform_mixture}
    If $\LB X|V=v  \RB\sim \U(\mu-\sigma\sqrt{v}, \mu + \sigma\sqrt{v})$, and $V \sim \Gamma[3/2,1/2]$, then $X \sim \mathcal{N}(\mu,\sigma^2)$, where $\Gamma[3/2,1/2]$ is the gamma distribution with shape and rate parameters 3/2 and 1/2, respectively.
\end{lemma}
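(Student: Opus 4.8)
The plan is to obtain the law of $X$ directly from its definition as a scale mixture, by integrating the conditional (uniform) density of $X$ against the (gamma) density of $V$ and verifying that the resulting marginal is exactly the $\mathcal{N}(\mu,\sigma^2)$ density, i.e. to evaluate $f_X(x)=\int_0^\infty f_{X\mid V}(x\mid v)\,f_V(v)\,dv$. Because shifting by $\mu$ only translates every distribution involved, I would first reduce to the case $\mu=0$; one may even set $\sigma=1$ and recover general $\sigma$ by the rescaling $X\mapsto\sigma X$, though keeping $\sigma$ explicit costs nothing here.

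First I would write down the two ingredients. The conditional density is
\[
 f_{X\mid V}(x\mid v)=\frac{1}{2\sigma\sqrt{v}}\,\mathbf{1}\!\left\{|x-\mu|\le \sigma\sqrt{v}\right\},
\]
and, using $\Gamma(3/2)=\sqrt{\pi}/2$, the $\Gamma[3/2,1/2]$ density simplifies to
\[
 f_V(v)=\frac{(1/2)^{3/2}}{\Gamma(3/2)}\,v^{1/2}e^{-v/2}=\frac{1}{\sqrt{2\pi}}\,v^{1/2}e^{-v/2},\qquad v>0.
\]

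The key step is to combine these inside the marginalization integral. The support condition $|x-\mu|\le\sigma\sqrt{v}$ is equivalent to $v\ge (x-\mu)^2/\sigma^2$, which turns the integral over $v$ into one with lower limit $(x-\mu)^2/\sigma^2$. At that point the factor $v^{-1/2}$ coming from the uniform width cancels exactly against the factor $v^{1/2}$ from the gamma density, leaving the elementary integral
\[
 f_X(x)=\frac{1}{2\sigma\sqrt{2\pi}}\int_{(x-\mu)^2/\sigma^2}^{\infty} e^{-v/2}\,dv
 =\frac{1}{\sigma\sqrt{2\pi}}\,e^{-(x-\mu)^2/(2\sigma^2)},
\]
which is precisely the $\mathcal{N}(\mu,\sigma^2)$ density, as claimed.

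I do not expect a genuine obstacle: once the integration limit is read off correctly from the support of the uniform, the computation is routine. The one point worth emphasizing is that the whole argument hinges on the shape parameter being exactly $3/2$, since this is the unique choice whose $v^{1/2}$ factor cancels the $v^{-1/2}$ from the $1/(2\sigma\sqrt{v})$ normalization of the uniform; the rate $1/2$ is then forced so that the surviving $e^{-v/2}$ integrates to the Gaussian kernel $e^{-(x-\mu)^2/(2\sigma^2)}$. An alternative route via characteristic functions, integrating $\sin(\sigma\sqrt{v}\,t)/(\sigma\sqrt{v}\,t)$ against $f_V$, would also work but is markedly messier, so I would favor the direct density calculation above.
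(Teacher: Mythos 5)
Your proposal is correct: the marginalization integral, the cancellation of $v^{-1/2}$ against $v^{1/2}$, and the resulting constant $\frac{1}{\sigma\sqrt{2\pi}}$ all check out, and your remark that shape $3/2$ is forced by the cancellation is a nice sanity check. The paper itself gives no proof of this lemma---it is cited from Walker's work on scale mixtures of uniforms---and your direct density calculation is exactly the standard derivation used there, so there is nothing to bridge or compare.
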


This lemma states that if the realization of $V$, which follows a gamma distribution $\Gamma[3/2,1/2]$, is not known, then the distribution of $X$ becomes a normal distribution.

Our solution also utilizes the following fact about subtractive dithering \cite{lipshitz1992quantization, roberts1962picture}.

\begin{lemma}
\label{lem:dithering}
    Let $Y$ be the scalar to be quantized and $\hat{Y}= Q\LB Y + U \RB - U$, where $U \sim \U\LB -\frac{\Delta}{2}, \frac{\Delta}{2} \RB$ and $Q$ is the quantization function with step size $\Delta$. Then, $\hat{Y} = Y + U'$, where $U' \sim \U\LB -\frac{\Delta}{2}, \frac{\Delta}{2} \RB$ and independent from $U$.
\end{lemma}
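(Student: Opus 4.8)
The plan is to collapse the statement onto a single computation modulo the step size $\Delta$. Setting $Z \triangleq Y + U$, I would first observe that the dither cancels in the reconstruction error: $U' \triangleq \hat{Y} - Y = Q(Z) - U - Y = Q(Z) - Z$. Introducing $q(z) \triangleq Q(z) - z$, two elementary facts make this tractable: $q$ is $\Delta$-periodic (since $Q(z+\Delta) = Q(z) + \Delta$) and its range is an interval of length $\Delta$, so $U' = -(Z \bmod \Delta)$ equals the negative of the centered remainder of $Z$. The lemma is therefore equivalent to the claim that $Z = Y+U$, reduced modulo $\Delta$, is uniform on one period and carries no information about $Y$.

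For this core claim I would use a characteristic-function argument on the circle group $\mathbb{R}/\Delta\mathbb{Z}$, whose law is pinned down by the Fourier coefficients $c_k = \mathbb{E}[e^{\,i 2\pi k Z/\Delta}]$, $k \in \mathbb{Z}$; the uniform distribution is the unique one with $c_k = 0$ for all $k \neq 0$. Because $U \sim \U(-\Delta/2,\Delta/2)$ spans exactly one full period, direct integration gives $\mathbb{E}[e^{\,i 2\pi k U/\Delta}] = \sin(\pi k)/(\pi k) = 0$ for every nonzero integer $k$. Conditioning on $Y$ and using $U \perp Y$, one gets $c_k = e^{\,i 2\pi k Y/\Delta}\,\mathbb{E}[e^{\,i 2\pi k U/\Delta}]$, which is $0$ for $k \neq 0$ and $1$ for $k=0$, for every fixed $Y$. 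Hence $Z \bmod \Delta$ is uniform on the period conditionally on each value of $Y$, so $U' = -(Z \bmod \Delta)$ is $\U(-\Delta/2,\Delta/2)$ up to a boundary of measure zero.

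Finally, since the conditional law of $U'$ given $Y=y$ is the same uniform distribution for every $y$, $U'$ is statistically independent of the quantized input $Y$, which is exactly the property the later DP argument needs: the reconstruction noise reveals nothing about the data. I would note that the operative independence is from the input $Y$ rather than from the dither itself, since $U' = q(Y+U)$ is a deterministic function of $U$ once $Y$ is fixed; what the computation shows is that its law coincides with that of a fresh independent uniform draw. The main obstacle is precisely the core claim of the second step: the fact that adding the dither and reducing mod $\Delta$ erases all dependence on $Y$ hinges delicately on the support of $U$ having length exactly $\Delta$, which is what annihilates every nonzero Fourier coefficient and would fail for a narrower or wider dither. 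The only remaining subtlety is the quantizer's tie-breaking convention at cell boundaries, which affects a set of measure zero and does not change the conclusion.
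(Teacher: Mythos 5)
Your proof is correct, but note that the paper contains no proof of this lemma to compare against: it is imported as a known fact, with citations to Roberts (1962) and Lipshitz et al.\ (1992). Your argument --- reduce to $U' = Q(Z)-Z$ with $Z = Y+U$, observe that only $Z \bmod \Delta$ matters by $\Delta$-periodicity of the error function, and kill every nonzero Fourier coefficient on $\mathbb{R}/\Delta\mathbb{Z}$ via $\mathbb{E}[e^{i2\pi k U/\Delta}] = \sin(\pi k)/(\pi k) = 0$ --- is essentially the classical proof of Schuchman's condition from exactly those references, so you have independently reconstructed the standard argument rather than found a new route.

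Your closing caveat deserves emphasis, because it is in fact a correction of the statement as printed. The lemma asserts $U'$ is independent of $U$, which is literally false: for fixed $Y = y$, $U' = q(y+U)$ is a deterministic function of $U$, so the two are maximally dependent even though each is marginally uniform; independence of $U$ would additionally require $Y \bmod \Delta$ itself to be uniform, which is not assumed. What your proof establishes --- $U'$ uniform on $\left(-\frac{\Delta}{2},\frac{\Delta}{2}\right)$ and independent of the input $Y$ --- is the correct statement, and it is also the property the paper's Theorem 1 actually invokes: the reconstruction noise must reveal nothing about the clipped gradient so that the gamma mixture of Lemma 1 yields a data-independent $\mathcal{N}(0,\sigma^2)$ perturbation. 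One minor inaccuracy in your final remark: Schuchman's condition does not fail for every wider dither; a uniform dither over any integer multiple of $\Delta$ also annihilates the nonzero Fourier coefficients, since $\mathbb{E}[e^{i2\pi k U/\Delta}] = \sin(\pi k n)/(\pi k n) = 0$ for integer $n$. It fails only when the support is not an integer number of periods. This does not affect the validity of your proof of the lemma as stated.
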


We summarize our proposed method in \cref{alg:algorithm}. To generate batches $\mathcal{B}_{i,t}$, each client $i$ employs \textit{Poisson sampling}, that is, each sample in the local dataset $\mathcal{D}_i$ is sampled independently with probability $p$. Hence, $B=p|\mathcal{D}_i|$. To achieve formal privacy guarantees, at each client, we clip each sample gradient in the batch so that its $L_2$-norm is bounded by a parameter $C$. Then, the client computes the average of the sample gradients, $\fgAv{i}{t}$. Since $\norm{2}{\fg} \leq C$, each element of the vector $\fgAv{i}{t}$ also lies within the range $[-C,C]$. 

The client quantizes $\fgAv{i}{t}$ prior to sending it to the PS. Before quantization, for each element $(\fgAv{i}{t})_j$, $j\in [m]$, the client samples $V_{i,j}=v_{i,j}$ from the gamma distribution $\Gamma[3/2,1/2]$, and set the quantization step size $\Delta_{i,j}=2\sigma \sqrt{v_{i.j}}$. Hence, the client uses a separate step size parameter for every element of $\fgAv{i}{t}$. Accordingly, the representative quantization points are set as $\{\cdots, -\frac{3\Delta_{i,j}}{2}, -\frac{\Delta_{i,j}}{2}, \frac{\Delta_{i,j}}{2}, \frac{3\Delta_{i,j}}{2}, \cdots\}$. Then, for each element of $(\fgAv{i}{t})_j$, $j\in [m]$, it samples $U_{i,j}$ from $\U\LB-\frac{\Delta_{i,j}}{2}, \frac{\Delta_{i,j}}{2}\RB$ and quantizes $(\fgAv{i}{t})_j + U_{i,j}$. To be precise, we use the quantization function $Q(x) \triangleq \round{\frac{x-\Delta/2}{\Delta}}\Delta + \frac{\Delta}{2}$, where $\round{\cdot}$ is the function rounding its argument to the nearest integer.

To transmit the quantized gradients to the PS, client $i$ encodes each element $(\fgAv{i}{t})_j$ using $b_{i,j} \triangleq \left\lceil \log_2\LB 2\cdot \round{\frac{C}{\Delta_{i,j}}+1}\RB \right\rceil$ bits since $|(\fgAv{i}{t})_j|\leq C$, resulting in $\sum_{j\in[m]} b_{i,j}$
% \begin{equation}
%     \sum_{j\in[m]} \left\lceil \log_2\LB 2\cdot \round{\frac{C}{\Delta_{i,j}}+1}\RB \right\rceil
% \end{equation}
bits of communication from client $i$ to the PS in round $t$. We denote the message sent  by client $i$ in round $t$ by $\mathbf{m}_{i,j}$.

\begin{algorithm}[t]
\begin{small}
\begin{center}
\begin{algorithmic}
\textbf{Protocol in client $i$:}
\FOR{$t \in [T]$}
\STATE Receive $\mathbf{w}_{t-1}$ from the PS
\STATE Sample $\mathcal{B}_{i,t}$ from $\mathcal{D}_i$
\FOR{$d=(d_f, d_l) \in \mathcal{B}_{i,t}$} 
\STATE Calculate $\fg$
\STATE Clip: $\fg = \fg/\max\left\{1,\frac{\norm{2}{\fg}}{C}\right\}$
\ENDFOR
\STATE Calculate average gradient $\fgAv{i}{t}=\frac{1}{B} \sum_{d\in \mathcal{B}_{i,t}} \fg$
\FOR{$j \in [m]$}
\STATE Sample $v_{i,j} \sim \Gamma[3/2,1/2]$
\STATE Calculate $\Delta_{i,j} = 2\sigma \sqrt{v_{i,j}}$
\STATE Sample $U_{i,j} \sim \U\LB -\frac{\Delta_{i,j}}{2}, \frac{\Delta_{i,j}}{2} \RB$
\STATE Quantize: $\mathbf{m}_{i,j}=Q\LB(\fgAv{i}{t})_j+U_{i,j}\RB$ mapping it to the closest value in $\{\cdots, -\frac{3\Delta_{i,j}}{2}, -\frac{\Delta_{i,j}}{2}, \frac{\Delta_{i,j}}{2}, \frac{3\Delta_{i,j}}{2}, \cdots\}$.
\STATE Send the $\mathbf{m}_{i,j}$ by using $2C/\Delta_{i,j}$ bits to the PS.
\ENDFOR

\ENDFOR
\end{algorithmic}
\end{center} 

\begin{center}
\begin{algorithmic}
\textbf{Protocol in the PS:}
\FOR{$t\in[T]$}
\FOR{$i \in [N]$}
\FOR{$j \in [m]$}
\STATE Sample $v_{i,j} \sim \Gamma[3/2,1/2]$ and $U_j \sim \U\LB -\frac{\Delta_{i,j}}{2}, \frac{\Delta_{i,j}}{2} \RB$ using the shared randomness with client $i$.
\STATE Receive $\mathbf{m}_{i,j}$ and decode as $Q\LB(\fgAv{i}{t})_{j}+U_{i,j}\RB$
\STATE Estimate $(\fgAvEst{i}{t})_{j} = Q\LB(\fgAv{i}{t})_{j}+U_{i,j}\RB - U_j$
\ENDFOR
\ENDFOR
\STATE Average gradients $\g_t = \frac{1}{N} \sum_{i\in[N]} (\fgAv{i}{t})_{j}$
\STATE Update the model $\mathbf{w}_{t+1} = \mathbf{w}_{t} -\eta\g_t $
\STATE Broadcast $\mathbf{w}_{t+1}$ to all clients
\ENDFOR
\end{algorithmic}
\end{center}
\end{small}
\caption{Proposed Algorithm\label{alg:algorithm}}
\end{algorithm}

Using the common randomness shared between client $i$ and the PS, the same realizations of $V_{i,j}$'s and $U_{i,j}$'s generated by client $i$ can also be obtained by the PS. From $\mathbf{m}_{i,j}$, the PS can decode the value of $Q\LB(\fgAv{i}{t})_{j}+U_{i,j}\RB$, $\forall i \in [N]$ and $\forall j \in [m]$. Via subtractive dithering, the PS estimates $(\fgAv{i}{t})_{j}$ as $(\fgAvEst{i}{t})_{j}\triangleq Q\LB(\fgAv{i}{t})_{j}+U_{i,j}\RB - U_{i,j} = (\fgAv{i}{t})_{j} + U'_{i,j}$, where $U'_{i,j} \sim \U\LB-\frac{\Delta_{i,j}}{2}, \frac{\Delta_{i,j}}{2}\RB$. Finally, from $(\fgAvEst{i}{t})_{j}$, $\forall i \in [N]$, the PS calculates the global gradient average $\g$, and updates the global model accordingly. Then it broadcasts the new global model to the clients for the next round. The global gradient average satisfies central DP requirement as stated by the following theorem.

\begin{theorem}
    Global gradient average $\g$ is a noisy estimate of the averages of the local gradients such that 
    \begin{equation}
    \label{eq:average_cs}
        \g = \frac{1}{N} \sum_{i\in[N]} (\fgAv{i}{t})_i + \mathcal{N}\LB 0, \frac{\sigma^2}{N} \mathbf{I}_m\RB.
    \end{equation}
Hence, $\forall \varepsilon' > 0$, $\g$ satisfies $(\varepsilon,\delta)$-DP in sample-level against clients for $\varepsilon = \log\LB1+p(e^{\varepsilon'}-1)\RB
$
and 
\begin{equation}
\label{eq:dp_guarantee}
\delta=p\cdot\Phi\LB \frac{C}{\sigma B\sqrt{N}}-\frac{\varepsilon'\sigma B\sqrt{N}}{2C}\RB
-p\cdot e^{\varepsilon'}\Phi\LB-\frac{C}{\sigma B\sqrt{N}}-\frac{\varepsilon'\sigma B\sqrt{N}}{2C}\RB,    
\end{equation}
where $\Phi$ denotes the CDF of the standard normal distribution.
\end{theorem}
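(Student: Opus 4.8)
The plan is to establish the two assertions separately: first the exact Gaussian form of the aggregation noise in \eqref{eq:average_cs}, and then the $(\varepsilon,\delta)$ guarantee, which I would derive by recognizing \eqref{eq:average_cs} as a Gaussian mechanism and layering privacy amplification by Poisson subsampling on top. For the noise characterization I would fix a client $i$ and a coordinate $j$ and analyze the reconstruction error as seen by a party lacking the common randomness shared between client $i$ and the PS. By \cref{lem:dithering}, the PS estimate is $(\fgAvEst{i}{t})_j = (\fgAv{i}{t})_j + U'_{i,j}$ with $U'_{i,j} \sim \U\LB -\frac{\Delta_{i,j}}{2}, \frac{\Delta_{i,j}}{2}\RB$; substituting $\Delta_{i,j}=2\sigma\sqrt{v_{i,j}}$ shows that, conditioned on $V_{i,j}=v_{i,j}$, we have $U'_{i,j}\sim\U\LB -\sigma\sqrt{v_{i,j}}, \sigma\sqrt{v_{i,j}}\RB$. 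Since such a party does not know $V_{i,j}$, I would marginalize over $V_{i,j}\sim\Gamma[3/2,1/2]$ and invoke \cref{lem:uniform_mixture} with $\mu=0$ to conclude $U'_{i,j}\sim\mathcal{N}(0,\sigma^2)$. Because the randomness sources are independent across clients, the $U'_{i,j}$ are independent over $i$; each being marginally normal, the convolution-stability of the Gaussian family gives $\frac{1}{N}\sum_{i} U'_{i,j}\sim\mathcal{N}\LB 0, \sigma^2/N\RB$, and stacking the $m$ coordinates yields \eqref{eq:average_cs}.

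For the privacy statement I would read \eqref{eq:average_cs} as releasing a vector $\g$ equal to a deterministic function of the data corrupted by isotropic noise of standard deviation $s=\sigma/\sqrt{N}$, i.e., a Gaussian mechanism. The next step is the sample-level $L_2$-sensitivity: each clipped sample gradient has norm at most $C$ and enters the query $\frac{1}{N}\sum_i\fgAv{i}{t}$ with weight $\frac{1}{NB}$, so altering a single sample changes the query by at most $\Delta_2=\frac{2C}{NB}$. Feeding $\Delta_2$ and $s$ into the tight (analytic) Gaussian-mechanism characterization gives that, conditioned on the sample actually being drawn into the batch, the release is $(\varepsilon',\delta')$-DP with $\delta'=\Phi\LB\frac{\Delta_2}{2s}-\frac{\varepsilon' s}{\Delta_2}\RB-e^{\varepsilon'}\Phi\LB-\frac{\Delta_2}{2s}-\frac{\varepsilon' s}{\Delta_2}\RB$; substituting the two values simplifies $\frac{\Delta_2}{2s}=\frac{C}{\sigma B\sqrt{N}}$ and $\frac{\varepsilon' s}{\Delta_2}=\frac{\varepsilon'\sigma B\sqrt{N}}{2C}$, reproducing exactly the arguments of $\Phi$ in \eqref{eq:dp_guarantee}.

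It remains to account for Poisson sampling at rate $p$. Applying the standard privacy-amplification-by-subsampling theorem to the $(\varepsilon',\delta')$-DP base mechanism converts it into an $(\varepsilon,\delta)$-DP guarantee with $\varepsilon=\log\LB 1+p(e^{\varepsilon'}-1)\RB$ and $\delta=p\,\delta'$, matching the claimed expressions. I expect the principal obstacle to be conceptual rather than computational, namely justifying that, absent the shared randomness, the dithered reconstruction error is \emph{exactly} $\mathcal{N}(0,\sigma^2)$ (the chaining of \cref{lem:dithering} with \cref{lem:uniform_mixture}), since this is precisely what allows a single quantization step to double as the DP noise without injecting noise twice. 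The subordinate care-point is constant bookkeeping: the factor of two in $\Delta_2$ arising from the replace-one adjacency convention, and the use of the tight rather than the classical Gaussian-mechanism bound, so that both the $\Phi$-arguments and the amplified pair $(\varepsilon,\delta)$ align with \eqref{eq:dp_guarantee}.
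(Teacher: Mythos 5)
Your proposal is correct and takes essentially the same route as the paper: chaining \cref{lem:dithering} with \cref{lem:uniform_mixture} to make each client's reconstruction error marginally $\mathcal{N}(0,\sigma^2)$ absent the shared randomness (hence $\mathcal{N}(0,\sigma^2/N)$ after averaging), then using sample-level $L_2$-sensitivity $2C/(BN)$ with noise scale $\sigma'=\sigma/\sqrt{N}$ in the tight subsampled-Gaussian analysis. The only presentational difference is that you invoke the analytic Gaussian mechanism and Poisson-subsampling amplification (with $\delta = p\,\delta'$) as two separate steps, whereas the paper's \cref{lem:gaussian_mechanism} packages exactly that composition into a single combined lemma.
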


\begin{proof}
    For $j^{th}$ element of $(\fgAv{i}{t})_i$, $i \in [N]$, since $\Delta_{i,j} = 2\sigma\sqrt{v_{i,j}}$, $U'_{i,j}$ is distributed as $\U(-\sigma\sqrt{v_{i,j}}, \sigma\sqrt{v_{i,j}})$. Since $v_{i,j}$ is a sample from $\Gamma[3/2,1/2]$, according to \cref{lem:uniform_mixture}, $U'_{i,j}$ is distributed as $\mathcal{N}(0,\sigma^2)$ when the realization of $v_{i,j}$ is unknown. Since this is the case for any client $k\neq i$, the quantization noise of one client is normally distributed from the perspective of any other client. When we consider the averaging operation at the CS, we obtain \cref{eq:average_cs}. For the $(\varepsilon,\delta)$-DP guarantee, we use the following lemma, which is the joint statement of Theorem 8 in \cite{balle2018improving} and Theorem 8 in \cite{balle2018privacy}.

    \begin{lemma}
    \label{lem:gaussian_mechanism}
        Let $f$ be a function satisfying $\norm{2}{f(X)-f(X')}\leq L$ and $g$ be the Poisson sampling function with propability $p$. Then, for any $\varepsilon \geq 0$ and $\delta \in [0,1]$, the subsampled Gaussian mechanism $M(X) = f\circ g(X) + \mathcal{N}(0, \sigma'^2)$ is $(\varepsilon,\delta)$-DP if and only if, $\forall \varepsilon' >0$,  $\varepsilon = \log\LB1+p(e^{\varepsilon'}-1)\RB$ and

        \begin{equation}
        \label{eq:gaussian_mechanism}
            p\cdot \Phi\LB \frac{L}{2\sigma'}-\frac{\varepsilon'\sigma'}{L}\RB 
-p\cdot e^{\varepsilon'}\Phi\LB-\frac{L}{2\sigma'}-\frac{\varepsilon'\sigma'}{L}\RB \leq \delta.
        \end{equation}
    \end{lemma}

Since every client divides the sum of its sample gradients by $B$, and the PS also averages them over $N$ clients, the effect of the gradient of a single sample is at most $C/(BN)$. This makes the $L_2$ sensitivity of the signal $\frac{1}{N} \sum_{i\in[N]} (\fgAv{i}{t})_i$ to be $2C/(BN)$. If we put this quantity in place of $L$ in \cref{eq:gaussian_mechanism}, and set $\sigma'=\sigma/\sqrt{N}$ from \cref{eq:average_cs}, we obtain \cref{eq:dp_guarantee}.
\end{proof}

\begin{table}[th]
    \centering
    \caption{Experimental Results}
    %\resizebox{\linewidth}{!}{%
        \begin{tabular}{llcccccc}
    \toprule

    Dataset & Baseline & Final $\varepsilon$ & Accuracy & Epochs & $\sigma / \sqrt{N}$ & $C$ & $BN$\\ 
    \midrule
 \multirow{3}{*}{MNIST}  & Proposed Scheme & $1.45$ & $91.39{\scriptstyle \pm 1.04}$ & $10$ & $0.05$ & $2.0$ & \multirow{3}{*}{$32$}\\
 %\multirow{3}{*}{$44426$}\\
 & Uncomp-DP & $1.45$ & $91.97 {\scriptstyle \pm 0.76}$& $10$ & $0.05$ & $2.0$ \\
 & Non-private & $\infty$ & $ 98.90{\scriptstyle \pm 0.09}$& $10$ & $0$ & $\infty$ \\
 \midrule
  \multirow{3}{*}{EMNIST}  & Proposed Scheme & $0.95$ & $70.17 {\scriptstyle \pm 0.26}$ & $10$ & $0.05$ & $2.0$& \multirow{3}{*}{$32$}\\
  %\multirow{3}{*}{$48846$}\\
 & Uncomp-DP & $0.95$ & $70.02 {\scriptstyle \pm 0.47}$&  $10$ & $0.05$ & $2.0$\\
 & Non-private & $\infty$ & $85.11 {\scriptstyle \pm 0.13}$&  $10$ & $0$ & $\infty$\\
 \midrule
  \multirow{3}{*}{CIFAR-10}  & Proposed Scheme & $7.03$ & $51.66 {\scriptstyle \pm 0.03}$  & $100$ & $0.01$ & $1.0$ & \multirow{3}{*}{$64$}\\
  %\multirow{3}{*}{$11181642$}\\
 & Uncomp-DP & $7.03$ & $50.92 {\scriptstyle \pm 1.35}$& $100$ & $0.01$  &$1.0$\\
 & Non-private & $\infty$ & $80.95 {\scriptstyle \pm 0.64}$& $50$ & $0$ & $\infty$\\
\bottomrule
    \end{tabular}%
   % }
    \label{tab:results}
    \end{table}%

\section{Experiments}

In this section, we conduct numerical experiments using the proposed method and two other baselines, namely \textit{Uncomp-DP} and \textit{non-private}. In both baselines, we use an uncompressed transmission from the clients to the PS using double precision. In Uncomp-DP, the PS adds the required amount of noise to the sum of received client updates to achieve the target $(\varepsilon,\delta)$-DP guarantees. On the other hand, in the non-private case, we do not impose any privacy requirements and the PS only averages the client updates. 

We evaluate the proposed scheme and the baselines on the MNIST, EMNIST (\textit{ByClass} partition) and CIFAR-10 datasets. We employ LeNet architecture \cite{lecun1998gradient} for MNIST and EMNIST, and ResNet-18 \cite{he2016deep} for CIFAR-10. For DP training, we use the Opacus library \cite{yousefpour2021opacus} and employ the privacy accounting techniques based on Renyi-DP \cite{abadi2016deep, mironov2019r}. To simulate FL, we evenly distribute the dataset among $N$ clients, i.e., same $|\mathcal{D}_i|$'s $\forall i \in [N]$. We determine the Poisson sampling probability $p$ by simply dividing the expected total batch size at the PS, i.e., $BN$, by the number of data points in the whole dataset, $|\mathcal{D}_i|N$. We keep $BN$ constant across different $N$'s and we consider it as a hyperparameter to tune independent of $N$. Similarly, we tune the value of $\sigma$ so that $\sigma /\sqrt{N}$ remains constant so that the same DP guarantees hold regardless of the number of clients. While, for the non-private cases, we train until convergence, for the private cases, we determine the number of epochs to reach reasonable privacy and accuracy levels. 

We repeat each experiment 10 times and report the average accuracies in \cref{tab:results} along with the final $\varepsilon$ value after composition for $\delta=10^{-6}$, length of the gradient vector per round, communication cost per round and some related training hyperparameters. In general, in the experiments with all three datasets, we observe that the accuracies of the proposed method and the Uncomp-DP match. This verifies our theory claiming that subtractive dithering quantization is equivalent to adding Gaussian noise at the PS. Hence, the privacy accounting of these methods also matches. 

Since we tune $B$ and $\sigma$ so that $\sigma /\sqrt{N}$ and $BN$ remain constant with $N$, the accuracies of our experiments does not depend on the number of clients involved, which means that the same accuracies in \cref{tab:results} applies to different $N$ values. However, since now $\sigma$ depends on $N$, the communication cost per client increases with the number of clients involved. Fortunately, we observe that the communication cost scales logarithmically with the number of clients; and hence, even with a very large numbers of clients, our scheme still uses significantly less communication. We plot the average communication cost per gradient element via numerical simulations in \cref{fig:comm_vs_clients}. We observe that in MNIST and EMNIST experiments per-element costs match since we use the same $C$ and $\sigma$ parameters. For CIFAR-10, to have good accuracy, we tune the hyperparameters differently, and hence, we end up with a slightly larger communication cost. In all the cases, however, compared to double precision, which uses 64 bits per element, we use $12$ to $5.5$ times less communication depending on the number of clients without sacrificing accuracy. This observation shows that the proposed scheme saves a significant amount of communication for free in DP training.

%Since EMNIST dataset has a large number of samples, we were able to achieve much smaller $\varepsilon$ values for this dataset.

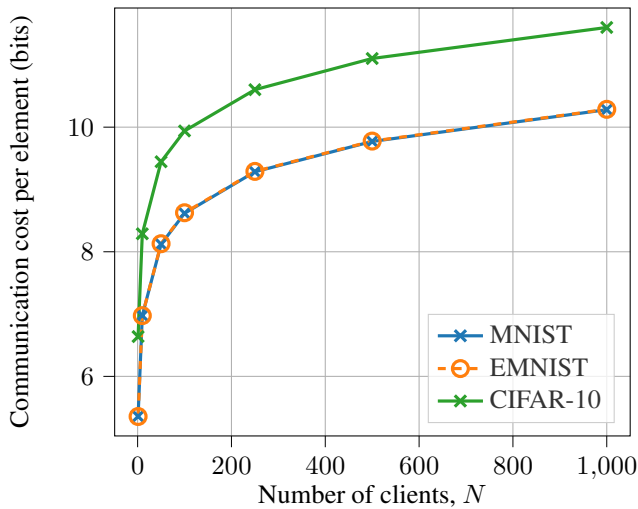
\begin{figure}[t]
%\begin{small}
\centering
%\resizebox{0.75\linewidth}{!}{
% This file was created with tikzplotlib v0.9.17.
\begin{tikzpicture}

\definecolor{color0}{rgb}{0.12156862745098,0.466666666666667,0.705882352941177}
\definecolor{color1}{rgb}{1,0.498039215686275,0.0549019607843137}
\definecolor{color2}{rgb}{0.172549019607843,0.627450980392157,0.172549019607843}

\begin{axis}[
legend cell align={left},
legend style={
  fill opacity=0.8,
  draw opacity=1,
  text opacity=1,
  at={(0.97,0.03)},
  anchor=south east,
  draw=white!80!black
},
tick align=outside,
tick pos=left,
x grid style={white!69.0196078431373!black},
xlabel={Number of clients, $N$},
xmin=-48.95, xmax=1049.95,
xtick style={color=black},
y grid style={white!69.0196078431373!black},
ylabel={Communication cost per element (bits)},
ymin=5.04389307498932, ymax=11.912415766716,
ytick style={color=black},
grid
]
\addplot [very thick, color0, mark=x, mark size=3, mark options={solid}]
table {%
1 5.35832643508911
10 6.97735548019409
50 8.12100982666016
100 8.6168909072876
250 9.28363990783691
500 9.77060699462891
1000 10.280421257019
};
\addlegendentry{MNIST}
\addplot [very thick, dashed, color1, mark=o, mark size=3, mark options={solid}]
table {%
1 5.35609865188599
10 6.97524881362915
50 8.12940692901611
100 8.62639713287354
250 9.2899112701416
500 9.77639961242676
1000 10.2861642837524
};
\addlegendentry{EMNIST}
\addplot [very thick, color2, mark=x, mark size=3, mark options={solid}]
table {%
1 6.6387243270874
10 8.28881645202637
50 9.44507122039795
100 9.93927097320557
250 10.6015644073486
500 11.1020011901855
1000 11.6002101898193
};
\addlegendentry{CIFAR-10}
\end{axis}

\end{tikzpicture}
%}
\caption{Communication cost vs. number of clients for the proposed scheme. \label{fig:comm_vs_clients}}
%\end{small}
\end{figure}

\section{Conclusion}
Through both theoretical analysis and experimental demonstrations, we have shown that using subtractive dithering quantization in the trusted aggregator model of FL can produce the same level of DP and accuracy guarantees as Gaussian noise addition, while utilizing fewer communication resources. This technique may prove useful in speeding up privacy-sensitive learning in communication-scarce scenarios such as edge training or time-critical industrial applications. Although the trusted aggregator model has many real-world applications, one possible area of future exploration is extending our methods to situations where trust in the PS is difficult to achieve. Additionally, exploring the possibility of extending the proposed technique to joint quantization would be an interesting future research direction.

%\newpage

% {\fontsize{9}{12}\selectfont \bibliographystyle{IEEEbib-abbr}
% \bibliography{refs}
% } 

% \small
 %\bibliographystyle{IEEEbib-abbr}
 \bibliographystyle{IEEEbib}
 \bibliography{refs}

\end{document}